\DeclareMathOperator{\conv}{Conv}
\DeclareMathOperator{\tovec}{Vec}
\DeclareMathOperator{\Orth}{O}
\DeclareMathOperator{\SO}{SO}
\DeclareMathOperator{\slice}{split}
\DeclareMathOperator{\concat}{concat}
\def\so{\mathfrak{so}}
\def\L{\mathcal{L}}
\newcommand{\diff}[2]{\frac{\text{d} #1}{\text{d} #2}}
\newtheorem{theorem}{Theorem}
\newtheorem{corollary}[theorem]{Corollary}
\newtheorem{remark}[theorem]{Remark}
\newtheorem{definition}[theorem]{Definition}
\title{iUNets: Fully invertible U-Nets with Learnable Up- and Downsampling}
\author{%
  Christian Etmann\\
  DAMTP\\
  University of Cambridge\\
  \texttt{cetmann@damtp.cam.ac.uk} \\
   \And
  Rihuan Ke\\
  DAMTP\\
  University of Cambridge\\
  \texttt{rk621@cam.ac.uk} \\
   \And
  Carola-Bibiane Sch\"{o}nlieb\\
  DAMTP\\
  University of Cambridge\\
  \texttt{cbs31@cam.ac.uk} \\
}
\begin{document}

\maketitle

\begin{abstract}
U-Nets have been established as a standard architecture for image-to-image learning problems such as segmentation and inverse problems in imaging. For large-scale data, as it for example appears in 3D medical imaging, the U-Net however has prohibitive memory requirements. Here, we present a new fully-invertible U-Net-based architecture called the iUNet, which employs novel learnable and invertible up- and downsampling operations, thereby making the use of memory-efficient backpropagation possible. This allows us to train deeper and larger networks in practice, under the same GPU memory restrictions. Due to its invertibility, the iUNet can furthermore be used for constructing normalizing flows.
\end{abstract}

\section{Introduction}
\label{sec:intro}
The U-Net \citep{ronneberger2015u} and its numerous variations have become the standard approach for learned segmentation and other different image-to-image tasks. Their general idea is to downsample and later recombine features (via e.g., channel concatenation) with an upsampled branch, thereby allowing for long-term retention and processing of information at different scales. While originally designed for 2D images, their design principle carries over to 3D \citep{cciccek20163d}, where it has found applications in many medical imaging tasks. In these high-dimensional settings, the lack of memory soon poses a problem, as intermediate activations need to be stored for backpropagation. Besides checkpointing, \emph{invertible neural networks} are one possible solution to these memory bottlenecks. The idea is to construct neural networks from invertible (bijective) layers and to successively \emph{reconstruct} activations from activations of deeper layers \citep{gomez2017reversible}. For fully-invertible architectures, this means that the memory demand is \emph{independent of the depth of the network}.\newline
\emph{Partially reversible U-Nets} \citep{brugger2019partially} already apply this principle to U-Nets for each resolution separately. There, since the downsampling is performed with max pooling and the upsampling is performed with trilinear upsampling (both of which are inherently non-invertible operations),  the down- and upsampled activations still have to be stored. Moreover, for other applications in which \emph{full} invertibility is fundamentally needed (such as in normalizing flows \citep{rezende2015variational}), those cannot be used. In this work, we introduce novel \emph{learnable} up- and downsampling operations, which allow for the construction of a fully invertible U-Net (\emph{iUNet}). We apply this iUNet to a learned 3D post-processing task as well as a volumetric medical segmentation task, and use it to construct a normalizing flow.

\section{Invertible Up- and Downsampling}
In this section, we derive learnable \emph{invertible up- and downsampling} operations.\\
\noindent Purely spatially up- and downsampling operators for image data are inherently non-bijective, as they alter  the dimensionality of their input. Classical methods include up- and downsampling with bilinear or bicubic interpolation as well as nearest-neighbour-methods \citep{bredies2018mathematical}. In neural networks and in particular in U-Net-like architectures, downsampling is usually performed either via max-pooling or with strided convolutions. Upsampling on the other hand is typically done via a strided transposed convolution.\newline
One way of \emph{invertibly} downsampling image data in neural networks is known as \emph{pixel shuffle} or \emph{squeezing} \citep{realnvp}, which rearranges the pixels in a $C\times H \times W$-image to a $4C \times H/2 \times W/2$-image, where $C$, $H$ and $W$ denote the number of channels, height and width respectively. Another classical example of such a transformation is the 2D Haar transform, which is a type of Wavelet transform \citep{mallat1999wavelet}. Here, a filter bank is used to decompose an image into \emph{approximation} and \emph{detail coefficients}. These invertible downsampling methods are depicted in Figures \ref{fig:pixelshuffle} and \ref{fig:haar}. In the context of invertible neural networks, these operations have previously been used \citep{ardizzone2019guided} and \citep{lensink2019fully}, the latter of which also use this for invertible upsampling to achieve an autoencoder-like structure. Inverse pixel shuffling on the other hand exhibit problematic artifacts (Fig. \ref{fig:inverse:pixel_shuffle}) when used for invertible downsampling, unless the input features are very non-diverse. This highlights, that extracted features and upsampling operators need to be tuned to one another in order to guarantee both feature diversity as well as outputs which are not inhibited by artifacts. In the following, we will hence introduce novel \emph{learnable} up- and downsampling operations.\\

\begin{figure}
     \centering
     \begin{subfigure}[t]{0.19\textwidth}
         \centering
         \includegraphics[width=\textwidth]{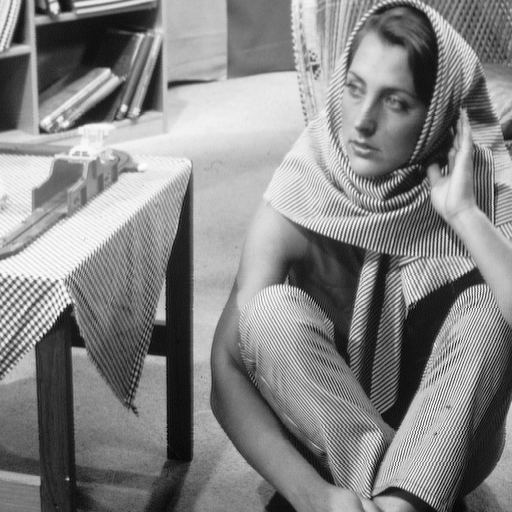}
         \caption{Test image}
         \label{fig:test_image}
     \end{subfigure}
     \hfill
     \begin{subfigure}[t]{0.19\textwidth}
         \centering    \includegraphics[width=\textwidth]{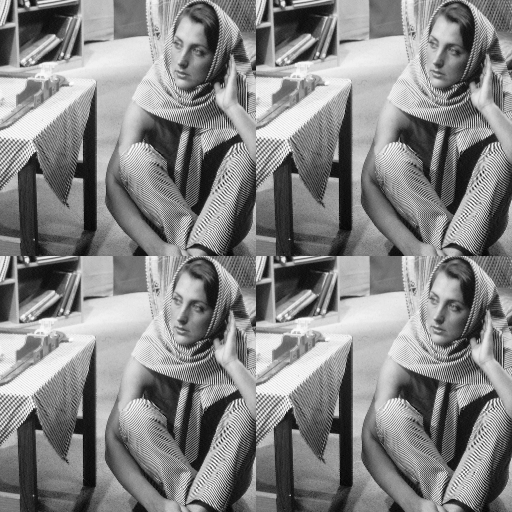}
         \caption{Pixel Shuffle}
         \label{fig:pixelshuffle}
     \end{subfigure}
     \hfill
     \begin{subfigure}[t]{0.19\textwidth}
         \centering
         \includegraphics[width=\textwidth]{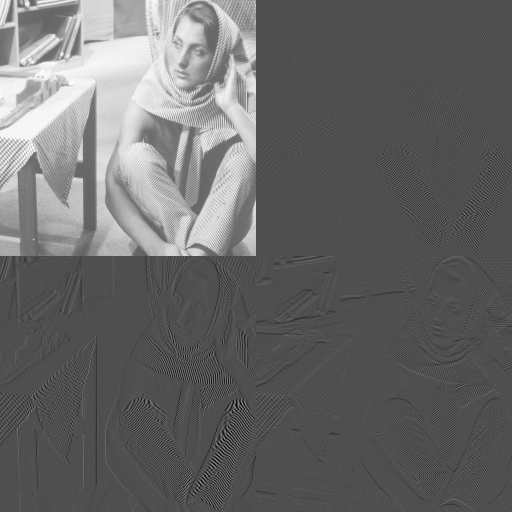}
         \caption{Haar Transform}
         \label{fig:haar}
     \end{subfigure}
     \hfill
     \begin{subfigure}[t]{0.19\textwidth}
         \centering
         \includegraphics[width=\textwidth]{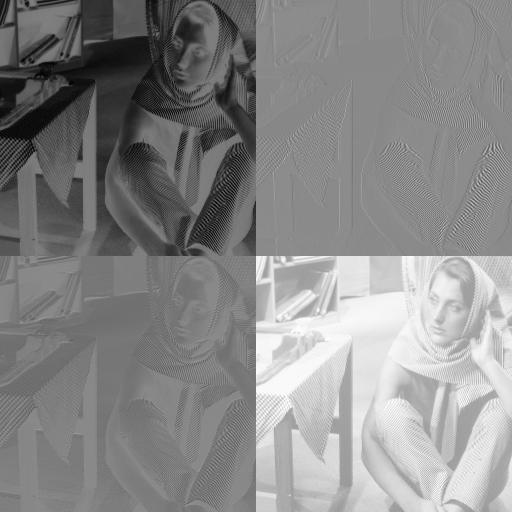}
         \caption{Learnable (random initialization)}
         \label{fig:learned_random}
     \end{subfigure}
     \hfill
     \begin{subfigure}[t]{0.19\textwidth}
         \centering
         \includegraphics[width=\textwidth]{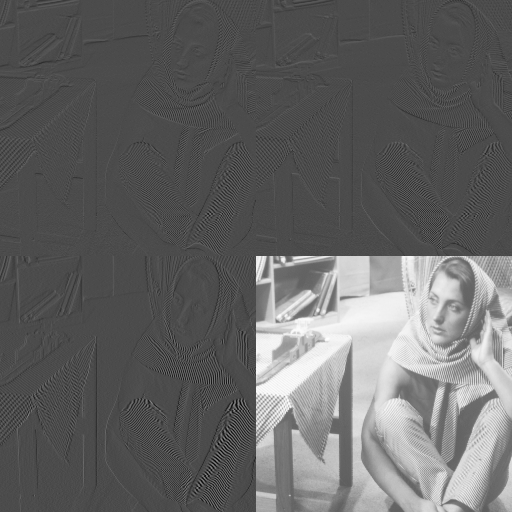}
         \caption{Learnable (minimized $\ell_1$ norm)}
         \label{fig:learned_sparsity}
     \end{subfigure}
\caption{A test image (a), downsampled invertibly using different methods ((b) -- (e)).}
    \label{fig:general_downsampling}
\end{figure}

\noindent The above principle of increasing the number of channels at the same time as decreasing the spatial resolution of each channel guides the creation of these learnable invertible downsampling operators. In the following, we call $d\in \mathbb{N}$ the spatial dimensionality. We say $N\in \mathbb{N}^d$ is divisible by $s\in \mathbb{N}^d$, if $N_i$ is divisible by $s_i$ for all $i\in [d]:=\{1,\dots,d\}$. We denote by $N\oslash s$ the element-wise (Hadamard) division of $N$ by $s$.

\begin{definition}
Let $N \in \mathbb{N}^d$ and the \emph{stride} $s \in \mathbb{N}^d$ for the \emph{spatial dimensionality} $d\in \mathbb{N}$, such that $N$ is divisible by $s$. We call $\sigma:=s_1\cdots s_d$ the \emph{channel multiplier}. For $\tilde{N}:=N \oslash s$ and $\tilde{C}=C \cdot \sigma$, we call 
$$D: \mathbb{R}^{C \times {N}_1 \times \cdots \times {N}_d} \to \mathbb{R}^{\tilde{C}\times \tilde{N}_1 \times \cdots \times \tilde{N}_d}$$
an \emph{invertible downsampling operator} if $D$ is bijective. If the function $D$ is parametrized by $\theta$, i.e. $D=D_\theta$, and $D_\theta$ is invertible for all $\theta \in \mathcal{P}$ (for some parameter space $\mathcal{P}$), then $D_\theta$ is called a \emph{learnable invertible downsampling operator}.
\end{definition}

\begin{remark}\label{rem:2d3dsigma}
For the practically relevant case of stride $2$ in all spatial directions, one has $\sigma=2^2=4$ for 2D data and $\sigma=2^3=8$ for 3D data.
\end{remark}

\noindent Opposite to the downsampling case, in the upsampling case the number of channels needs to be decreased as the spatial resolution of each channel is increased. Using their invertibility, we simply define invertible upsampling operators as inverse downsampling operators.

\begin{definition}
A bijective Operator $U$ is called an \emph{invertible upsampling operator}, if its inverse $U^{-1}$ is an invertible downsampling operator. If the inverse of an operator $U=U_\theta$ is a learnable invertible downsampling operator (parametrized by $\theta \in \mathcal{P}$), then $U_\theta$ is called a \emph{learnable invertible upsampling operator}.
\end{definition}

\begin{figure}
    \centering
    \includegraphics[scale=.7]{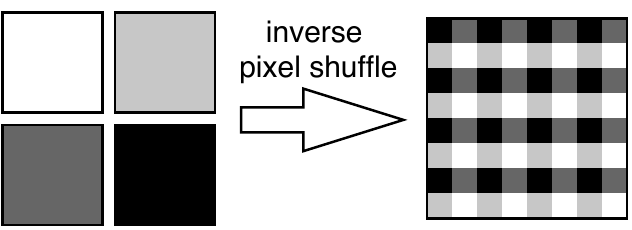}
    \caption{Using the inverse of the pixel shuffling operation will result in checkerboard artifacts.}
    \label{fig:inverse:pixel_shuffle}
\end{figure}

\noindent The general idea of our proposed \emph{learnable} invertible downsampling is to construct a suitable strided convolution operator $D_\theta$ (resulting in a spatial downsampling), which is \emph{orthogonal} (and hence due to the finite dimensionality of the involved spaces, bijective). Its inverse $D_\theta^{-1}$ is thus simply the adjoint operator. Let $D_\theta^\ast$ denote the adjoint operator of $D_\theta$, i.e. the unique linear operator such that $$\langle D_\theta \cdot x, y \rangle_{\mathbb{R}^{\tilde{C}\times \tilde{N}_1 \times \cdots \times \tilde{N}_d}} = \langle x, D^\ast_\theta \cdot y \rangle_{\mathbb{R}^{C\times N_1 \times \cdots \times N_d}}$$ for all $x$, $y$ from the respective spaces, where the $\langle \cdot, \cdot \rangle$ denotes the standard inner products. In this case, $D^\ast_\theta$ is the corresponding \emph{transposed convolution} operator. Hence, once we know how to construct a learnable orthogonal (i.e. invertible) downsampling operator, we know how to calculate its inverse, which is at the same time a learnable orthogonal upsampling operator.

\subsection{Orthogonal Up- and Downsampling Operators as Convolutions}
We will first develop learnable orthogonal downsampling operators for the case $C=1$, which is then generalized. The overall idea is to create an orthogonal matrix and reorder it into a convolutional kernel, with which a correctly strided convolution is an orthogonal operator.\\
\noindent Let $\conv_s (K,x)$ denote the convolution of $x\in \mathbb{R}^{C \times {N}_1 \times \dots \times {N}_d}$ with kernel $K\in \mathbb{R}^{\tilde{C} \times C \times k_1 \times \dots \times k_d}$ and stride $s$, where $k\in \mathbb{N}^d$. 
Corresponding to $s$ the channel multiplier is $\sigma:=s_1\cdots s_d$.
Let further $\Orth(\sigma,\mathbb{R})$ and $\SO (\sigma, \mathbb{R})$ denote the orthogonal and special orthogonal group of real $\sigma$-by-$\sigma$ matrices, respectively. The proofs for this section are contained in Appendix \ref{appdx:main_results_proofs}.

\begin{restatable}{theorem}{orthoConv}\label{thm:orthogonal_conv}
Let $C=1$ and $k=s$. Let further $R: \mathbb{R}^{\sigma \times \sigma} \to \mathbb{R}^{\sigma \times 1 \times s_1 \times \cdots \times s_d}$ be an operator that reorders the entries of a matrix $A \in \mathbb{R}^{\sigma \times \sigma}$, such that the entries of $(RA)_{i,1,\dots} \in \mathbb{R}^{s_1\times \cdots \times s_d}$ consist of the entries of the $i$-th row of $A$. Then for any $A \in \Orth(\sigma,\mathbb{R})$, the strided convolution $$\conv_s (RA, \cdot) :  \mathbb{R}^{1\times N_1 \times \cdots \times N_d} \to \mathbb{R}^{\sigma \times \tilde{N}_1 \times \dots \times \tilde{N}_d}$$ is an invertible downsampling operator. Its inverse is the corresponding transposed convolution.
\end{restatable}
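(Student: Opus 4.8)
The plan is to exploit the structural simplification that comes from the hypothesis $k=s$: when the kernel size equals the stride, the strided convolution simply reads off non-overlapping, tiling patches of the input. Concretely, since $N$ is divisible by $s$, the spatial domain partitions into exactly $\tilde{N}_1\cdots\tilde{N}_d$ disjoint blocks, the block indexed by the output coordinate $(j_1,\dots,j_d)$ being $\{(s_1 j_1 + r_1,\dots,s_d j_d + r_d): 0\le r_i < s_i\}$. Each such block contains exactly $\sigma=s_1\cdots s_d$ entries, and the blocks cover the whole domain without overlap. I would begin by recording this partition explicitly from the definition of $\conv_s$, so that the subsequent per-block analysis is unambiguous.

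The core step is to show that, on each block, the convolution acts as multiplication by the matrix $A$. Fix an output spatial location $(j_1,\dots,j_d)$ and let $v\in\mathbb{R}^\sigma$ denote the corresponding input patch, vectorized using the very ordering that defines $R$. By definition, the output value in channel $i$ at that location is the inner product of the kernel slice $(RA)_{i,1,\dots}$ with the patch; since $(RA)_{i,1,\dots}$ is precisely the $i$-th row of $A$ reordered by $R$, this inner product equals the $i$-th entry of $Av$. Collecting all $\sigma$ channels, the output at that location is exactly $Av$. Thus, after the fixed permutation $P$ that identifies the input with its patch-vectorizations and the output with its per-location channel-vectors, the operator $\conv_s(RA,\cdot)$ becomes block-diagonal with the single block $A$ repeated $\tilde{N}_1\cdots\tilde{N}_d$ times.

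From here the conclusion is immediate. A block-diagonal matrix whose blocks are orthogonal is orthogonal, and the reindexing permutation $P$ is itself orthogonal; hence $\conv_s(RA,\cdot)$ is an orthogonal linear map between the two finite-dimensional spaces and therefore bijective, i.e. an invertible downsampling operator. Being orthogonal, its inverse equals its adjoint, and by the definition of the adjoint $D_\theta^\ast$ recalled in the text, the adjoint of a strided convolution is the corresponding transposed convolution. This establishes both claims at once.

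I expect the only genuine difficulty to be notational rather than conceptual: the main obstacle is setting up the bookkeeping of the reordering $R$ and the patch-vectorization consistently, so that the identity ``output channel-vector equals $Av$'' holds exactly and the induced permutation $P$ is well defined. Once the indices are aligned, the orthogonality of $A$ does all the remaining work, and no further computation is needed.
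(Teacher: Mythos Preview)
Your proposal is correct and follows essentially the same approach as the paper: both arguments use the non-overlap condition $k=s$ to reduce $\conv_s(RA,\cdot)$ to a block-diagonal operator $\bigoplus A$ conjugated by reordering (permutation) maps, then deduce orthogonality from the orthogonality of $A$ and conclude that the inverse is the adjoint, i.e.\ the transposed convolution. The only cosmetic difference is that the paper phrases the reorderings as $\tovec$/$\tilde{\tovec}$ and verifies orthogonality via explicit inner-product preservation, whereas you invoke the fact that a block-diagonal matrix with orthogonal blocks is orthogonal.
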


\noindent Note that the assumption that $k=s$ (the strides match the kernel size) will hold for all invertible up- and downsampling operators in the following.
\subsection{Designing Learnable Orthogonal Downsampling Operations}
The above invertible downsampling operator is parametrized over the group $\Orth(\sigma,\mathbb{R})$ of real orthogonal matrices. Note that since orthogonal matrices have $\det = \pm 1$ (i.e. there are two connected components of $\Orth(\sigma,\mathbb{R})$), there is no way to smoothly parametrize the whole parameter set $\Orth(\sigma,\mathbb{R})$. However, if $A \in \SO (\sigma,\mathbb{R}) \subset \Orth(\sigma,\mathbb{R})$, then by switching two rows of $A$, the resulting matrix $A'$ has $\det(A')=-1$. Switching two rows of $A$ simply results in a different order of filters in the kernel $RA$. The resulting downsampling with kernel $RA'$ is thus the same as with kernel $RA$, up to the ordering of feature maps. Hence, the inability to parametrize both connected components of $\Orth(\sigma,\mathbb{R})$ poses no practical limitation, if one can parametrize $\SO (\sigma,\mathbb{R})$. Any such parametrization should be robust and straightforward to compute, as well as differentiable. One such parametrization is the exponentiation of skew-symmetric matrices (i.e. square matrices $S$, for which $S^T=-S$ holds).\\

\noindent From Lie theory \citep{sepanski2007compact}, it is known that the matrix exponential
\begin{equation}
    \exp: \so (\sigma,\mathbb{R}) \to \SO (\sigma, \mathbb{R}) 
\end{equation}
from the Lie algebra $\so (\sigma,\mathbb{R})$ of real skew-symmetric matrices to the Lie group $\SO (\sigma, \mathbb{R})$ is a surjective map (which is true for all compact, connected Lie groups and their respective Lie algebras). This means that one can create \emph{any} special orthogonal matrix by exponentiating a skew-symmetric matrix. The $\sigma$-by-$\sigma$ skew-symmetric matrices can simply be parametrized by 
\begin{equation}
    \theta - \theta^T \in \so (\sigma,\mathbb{R}),
\end{equation}
where $\theta \in \mathbb{R}^{\sigma \times \sigma}$ is a matrix. Note that this is an overparametrization -- any two matrices that differ up to an additive symmetric matrix will yield the same skew-symmetric matrix. Thus, by reordering $\exp (\theta -\theta^T)$ into a convolutional kernel and convolving it with the appropriate stride defines a learnable invertible downsampling operator (for $C=1$, i.e. one input channel).

\begin{corollary}\label{cor:learnable_downsampling_one_channel}
Let the same setting as in Theorem \ref{thm:orthogonal_conv} hold. Then the operator 
$$D_\theta: \mathbb{R}^{1\times N_1 \times \cdots \times N_d} \to \mathbb{R}^{\sigma \times \tilde{N}_1 \times \dots \times \tilde{N}_d}$$
defined by $$D_\theta: x \mapsto \conv_s (R \cdot \exp(\theta-\theta^T),x)$$
is a learnable invertible downsampling operator, parametrized by $\theta$ over the parameter space $\mathbb{R}^{\sigma \times \sigma}$.
\end{corollary}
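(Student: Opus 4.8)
The plan is to obtain the statement as an immediate consequence of Theorem~\ref{thm:orthogonal_conv}, whose hypothesis requires only that the matrix reordered by $R$ belong to $\Orth(\sigma,\mathbb{R})$. The entire task therefore reduces to verifying that $\exp(\theta-\theta^T)\in\Orth(\sigma,\mathbb{R})$ for every $\theta\in\mathbb{R}^{\sigma\times\sigma}$, after which one applies the theorem with $A:=\exp(\theta-\theta^T)$ and reads off both the invertibility and the fact that $D_\theta^{-1}$ is the corresponding transposed convolution.

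First I would record that $S:=\theta-\theta^T$ is skew-symmetric for every $\theta$, since $(\theta-\theta^T)^T=\theta^T-\theta=-S$; hence $S\in\so(\sigma,\mathbb{R})$. This holds with no constraint whatsoever on $\theta$, which is what makes the parameter space genuinely all of $\mathbb{R}^{\sigma\times\sigma}$ rather than a proper subset. Next I would show $\exp(S)\in\SO(\sigma,\mathbb{R})$. Rather than appealing to the surjectivity of $\exp$ quoted in the text (which is the converse direction and is not what is needed here), I would give the direct computation: because $S$ and $S^T=-S$ commute, $\exp(S)^T\exp(S)=\exp(S^T)\exp(S)=\exp(-S+S)=\exp(0)=I$, so $\exp(S)$ is orthogonal; and $\det\exp(S)=\exp(\operatorname{tr} S)=\exp(0)=1$, since a skew-symmetric matrix has a vanishing diagonal. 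Thus $\exp(S)$ lies in the connected component $\SO(\sigma,\mathbb{R})\subset\Orth(\sigma,\mathbb{R})$.

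With $A:=\exp(\theta-\theta^T)\in\Orth(\sigma,\mathbb{R})$ established, Theorem~\ref{thm:orthogonal_conv} gives directly that $D_\theta=\conv_s(RA,\cdot)$ is an invertible downsampling operator. Since this is valid for every $\theta\in\mathbb{R}^{\sigma\times\sigma}$, the defining condition of a learnable invertible downsampling operator — namely invertibility of $D_\theta$ for all $\theta$ in the parameter space — is met, and the proof concludes.

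I do not expect a substantive obstacle, since all of the analytic content resides in Theorem~\ref{thm:orthogonal_conv}; the corollary merely feeds it a specific family of orthogonal matrices. The one point worth stating carefully, so as not to invoke more than is true, is to keep the surjectivity of $\exp$ (used elsewhere to argue that restricting to $\SO(\sigma,\mathbb{R})$ costs no expressiveness) separate from the well-definedness claim actually required here, which is only that the \emph{image} of every skew-symmetric matrix under $\exp$ lands in $\Orth(\sigma,\mathbb{R})$. The two-line computation above settles precisely this latter claim.
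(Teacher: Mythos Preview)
Your proposal is correct and matches the paper's intended argument: the paper does not spell out a separate proof for this corollary but treats it as immediate from Theorem~\ref{thm:orthogonal_conv} once one knows that the exponential of a skew-symmetric matrix is special orthogonal. Your only (minor) departure is that you verify $\exp(S)\in\SO(\sigma,\mathbb{R})$ by the elementary identities $\exp(S)^T\exp(S)=\exp(-S+S)=I$ and $\det\exp(S)=e^{\operatorname{tr}S}=1$, whereas the paper simply cites the Lie-theoretic fact that $\exp:\so(\sigma,\mathbb{R})\to\SO(\sigma,\mathbb{R})$; your distinction between the surjectivity claim (needed for expressiveness) and the image claim (needed here) is apt and sharper than the paper's exposition.
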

\noindent Note that both examples of invertible downsampling from Figure \ref{fig:general_downsampling} can be reproduced with this parametrization (up to the ordering of feature maps), as proved in Appendix \ref{appdx:haar_by_exp}. The whole concept is summarized in Fig. \ref{fig:invertible_downsampling} and exemplified in Fig. \ref{fig:learned_random} and \ref{fig:learned_sparsity}. Our implementation of the matrix exponential and its Fr\'{e}chet derivative required for calculating gradients with respect to $\theta$ are described in Appendix \ref{appdx:implementation_details}.

\begin{figure}[h]\centering
    \includegraphics[width=.4\textwidth]{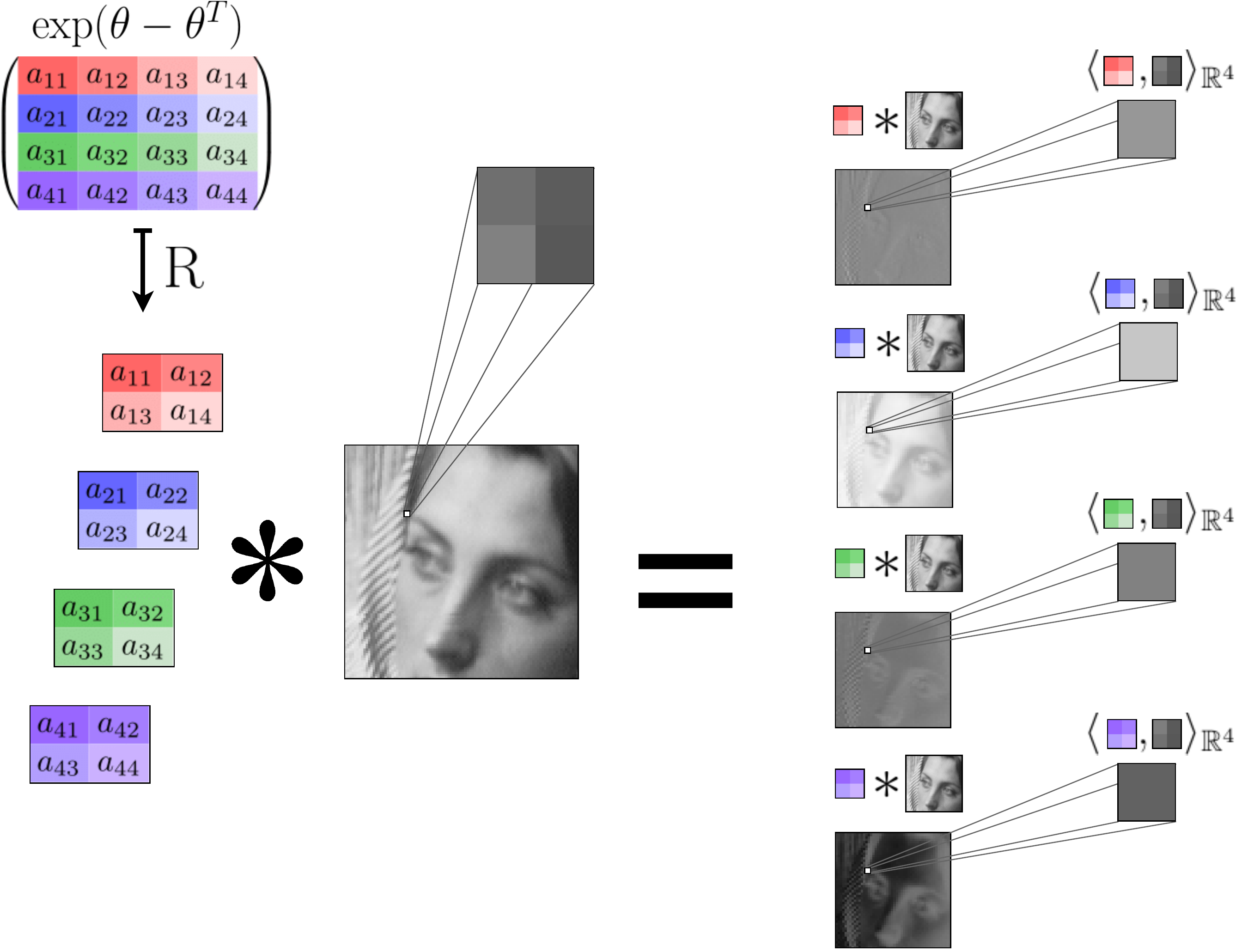}
\caption{Our concept for learnable, invertible downsampling. By exponentiating a skew-symmetric matrix $\theta - \theta^T$, a special orthogonal matrix can be created. When its rows are reordered into filters, convolving these with a stride that matches the kernel size results in an orthogonal convolution, which is a special case of a learnable invertible downsampling operator over the parameter space $\mathbb{R}^{\sigma\times\sigma}$. Because the computational windows of the convolution are non-overlapping, each pixel of the resulting channels is then just the standard inner product of the respective filter with the corresponding image patch in the original image.}\label{fig:invertible_downsampling}
\end{figure}

\noindent The case of $C=1$ can now be easily generalized to an arbitrary number of channels $C$ by applying the learnable invertible downsampling operation to each input channel independently (precise statement in Appendix \ref{appdx:main_results_proofs}).\\

Aside from exponentiating skew-symmetric matrices, orthogonal matrices can be obtained by Cayley transforms, products of Householder matrices as well as products of Givens rotations, some of which have previously been explored in the invertible neural networks literature for obtaining easily invertible layers, e.g. by parametrizing $1\times 1$-convolutions with these matrices \citep{ardizzone2019guided}\citep{reparamlie}\citep{golinski2019improving}\citep{inverttolearntoinvert}\citep{tomczak2016improving}.

\section{Invertible U-Nets}
The general principle of the classic U-Net \citep{ronneberger2015u} is to calculate features on multiple scales by a sequence of convolutional layers and downsampling operations in conjunction with an increase in the number of feature maps. The downsampled feature maps tend to capture large-scale features, whereas the more highly resolved feature maps capture more fine-grained properties of the data. The low-resolution features are successively recombined with the prior, high-resolution features via feature map concatenation, until the original spatial resolution of the input data is reached again.\\

\noindent In order to construct a fully invertible U-Net (\emph{iUNet}), we adopt these same principles. A depiction of the iUNet is found in Figure \ref{fig:iunet}. Note that unlike in the case of non-invertible networks, the total data dimensionality may not change -- in particular the number of channels may not change if the spatial dimensions remain the same. \newline
Unlike in the case of the classic U-Net, not all feature maps of a certain resolution can be concatenated with the later upsampled branch, as this would violate the condition of constant dimensionality. Instead, we split the $C$ feature maps into two portions of $\lambda C$ and $(1-\lambda)C$ channels (for appropriate \emph{split fraction} $\lambda$, s.t. $C > \lambda C \in \mathbb{N}$). The portion with $\lambda C$ channels gets processed further (cf. the gray blocks in Figure \ref{fig:iunet}), whereas the other portion is later concatenated with the upsampling branch (cf. the green blocks in Figure \ref{fig:iunet}). Splitting and concatenating feature maps are invertible operations.\newline
\noindent While in the classic U-Net, increasing the number of channels and spatial downsampling via max-pooling are separate operations, these need to be inherently linked for invertibility. This is achieved through our \emph{learnable invertible downsampling} to the (non-concatenated) split portion.

Mathematically, for scale $i\in [m]$, let $\Phi_i^L$ and $\Phi_i^R$ denote functions defined as sequences of invertible layers. For $i<m$, let $D_i$ and $U_i$ denote the invertible down- respectively upsampling operators and let $\text{split}_i$ and $\text{concat}_i$ denote the channel splitting and concatenation operators. For input $x^1_L$,
\begin{equation}
\begin{aligned}[c]
        y^L_i&=\Phi^L_i(x^L_i)& \\
        (\tilde{y}^L_i, c_i)&= \slice_i (y^L_i) &\text{ if } i<m  \\
        x^L_{i+1}&= D_i(\tilde{y}^L_i) &\text{ if } i<m
\end{aligned}
\qquad \qquad \qquad
\begin{aligned}[c]
        y^R_i&=\Phi^R_i(x^R_i) &\\
        \tilde{y}^R_{i-1}&= U_{i-1}(y^R_i) &\text{ if } i>1\\
        x^R_{i-1} &= \concat_{i-1} (\tilde{y}^R_{i-1},c_{i-1})&\text{ if } i>1
\end{aligned} \label{eq:iunet_definition}
\end{equation}
defines a function $f: x^L_1 \mapsto y^L_i$, which is our iUNet (see Figure \ref{fig:iunet}).


\begin{remark}\label{rem:channel_exponential_increase}
The number of channels increases exponentially as the spatial resolution decreases. The base of the exponentiation not only depends on the channel multiplier $\sigma$, but also on the channel split fraction $\lambda$, since only this fraction of channels gets invertibly downsampled. The number of channels thus increases by a factor of $\lambda \cdot \sigma$ between two resolutions. E.g. in 2D, $\lambda=1/2$ leads to a doubling of channels for $s=(2,2)$, whereas in 3D the split fraction $\lambda=1/4 $ is required to achieve a doubling of channels for $s=(2,2,2)$. This fine-grained control is in contrast to \citep{lensink2019fully}, where in 2D, the number of channels is always multiplied by 4 (in 2D) respectively 8 (in 3D), which makes a large number of downsampling operations infeasible.
\end{remark}

\subsection{Application of invertible U-Nets}
The above discussion already pointed towards some commonalities and differences between the iUNet and non-invertible U-Nets. The restriction that the dimensionality may not change between each invertible sub-network's input and output imposes constraints both on the architecture as well as the data. When invertibly downsampling, the spatial dimensions need to be exactly divisble by the strides of the downsampling. This is in contrast to non-invertible downsampling, where e.g. padding or cropping can be introduced. Furthermore, due to the application of channel splitting (or if one employs coupling layers), the number of channels needs to be at least 2. An alternative may be exchanging the order of invertible downsampling and channel splitting. \\
\noindent These restrictions may prove to be too strong in practice for reaching a certain performance for tasks in which \emph{full invertibility} is not strictly needed. For this, the number of channels before and after the iUNet can be changed (e.g. via a convolution), such that the memory-efficient backpropagation procedure can still be applied to the whole \emph{fully invertible sub-network}, i.e. the whole iUNet (see Fig. \ref{fig:iunet}. A general issue in memory-efficient backpropagation is stability of the inversion \citep{behrmann2020on}, which in turn influences the stability of the training. We found that using group or layer normalization \citep{groupnorm,ba2016layer} were effective means of stabilizing the iUNet in practice. 


\begin{figure*}[h]
\includegraphics[width=\textwidth]{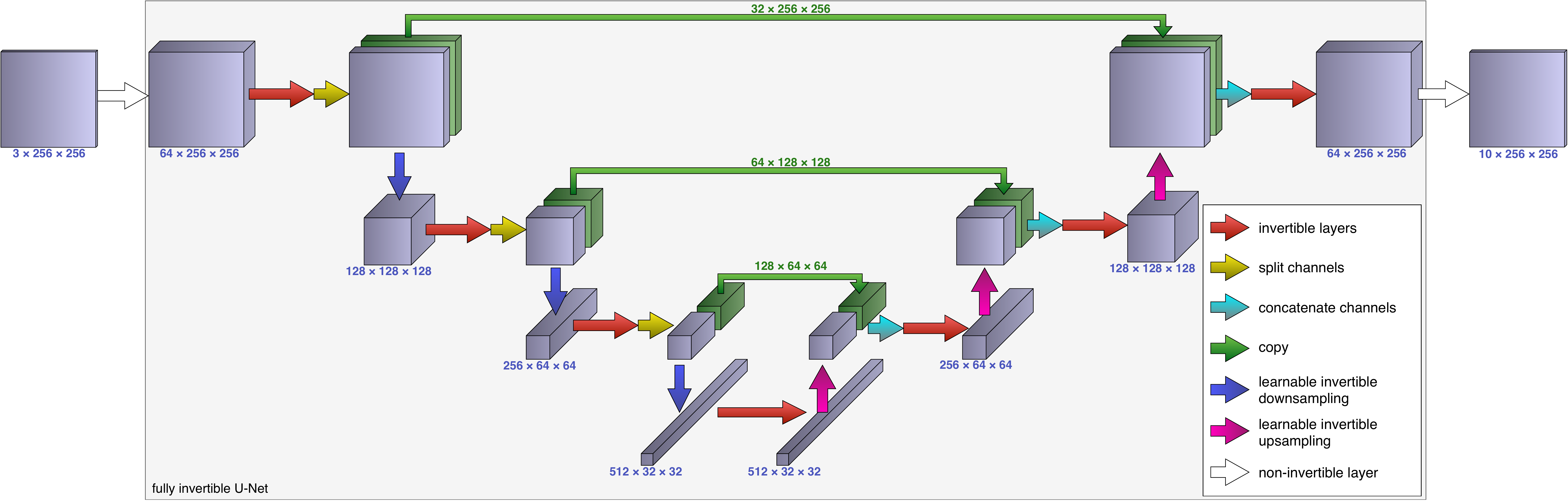}
\caption{Example of a 2D iUNet used for memory-efficient backpropagation for segmenting RGB-images into 10 classes. Linear convolutions are used to increase the number of channels to a desired number (64 in this example), which then determines the input and output data dimensionality of the invertible U-Net. Invertible layers, invertible up- and downsampling and skip connections in conjunction with channel splitting and concatenation make up the invertible U-Net (contained in the light-blue box).}\label{fig:iunet}
\end{figure*}

\subsection{Normalizing Flows}
Normalizing flows \citep{rezende2015variational} are a class of generative models, which -- much like generative adversarial networks \citep{goodfellow2014generative} and variational autoencoders \citep{variationalautoencoder} -- learn to map points from a simple, known distribution $q$ to points from a more complicated distribution $p$. Unlike these models, the use of a (locally diffeomorphic) invertible neural network $f$ allows for the evaluation of the likelihood of points under this model by employing the change-of-variables formula for probability densities. Let $z\sim q(z)$ (e.g. a normal distribution) and $x:=f^{-1}(z) \sim p(x)$, then
\begin{equation}
    \log p(x) = \log q(f(x)) + \log \left| \det \frac{\text{d} f(x)}{\text{d} x}  \right| \label{eq:log_likelihood_normalising}
\end{equation}
is the log-likelihood of $x$ under this model. For training $f$ as a maximum likelihood estimator over a training set, one thus needs to be able to evaluate the determinant-term in eq. \eqref{eq:log_likelihood_normalising}. Depending on the specific invertible layer, different strategies for evaluating this term exist, see e.g., \citep{dinh2016density} and \citep{resflows}.
If $f$ is an iUNet (according to the definitions in \eqref{eq:iunet_definition}), then $$\det \frac{\text{d} f(x_1^L)}{\text{d} x_1^L} = \prod\limits_{i=1}^{m} \det \frac{\text{d} \Phi^L_i(x^L_i)}{\text{d} x^L_i} \cdot \det \frac{\text{d} \Phi^R_i(x^R_i)}{\text{d} x^R_i},$$
indicating that only the nonlinear, invertible layers (e.g. coupling blocks) contribute to the 'log-abs-det' term.
This is because all up- and downsampling operations as well as the channel splitting and concatenation operations are special orthogonal, yielding unit determinants. The detailed statement and proof are found in Appendix \ref{appdx:normalizing_flows}. The iUNet can be regarded as an alternative to the 'factoring out' approach from \citep{dinh2016density}, which can roughly be understood as a multi-scale approach in which coarse-scale (more strongly downsampled) features are not fed back into the fine-grained feature extractors. We hence hope that the iUNet approach is more suited to normalizing flows, much like U-Net-like architectures have proven to work well with multi-scale features for e.g., segmentation tasks.

\section{Experiments}
In the following, results of an experiment on learned 3D post-processing from imperfect CT reconstruction, as well as a 3D segmentation experiment are presented. In all trained iUNets, the \emph{additive coupling layers} as defined in \citep{jacobsen2018revnet} were used. The models were implemented in \texttt{Pytorch} using the library \texttt{MemCNN} \citep{vandeLeemput2019MemCNN} for memory-efficient backpropagation. Moreover, we demonstrate the capability of iUNets as normalizing flows. In Appendix \ref{appdx:runtimes}, we additionally compare the differing runtimes between conventional training and memory-efficient training.

\subsection{Learned Post-Processing of Imperfect 3D CT Reconstructions}
The goal of this experiment is to test the invertible U-Net in a challenging, high-dimensional learned post-processing task on imperfect 3D CT reconstructions, where the induced undersampling artifacts appear on a large, three-dimensional scale. 
For this experiment, we created an artificial dataset of undersampled, low-dose CT reconstructions of the 3D 'foam phantoms' from \citep{pelt2018improving} at a resolution of $256^3$. The initial reconstruction was performed using filtered backprojection (FBP). In Appendix \ref{appdx:postprocessing_details}, a detailed description of the experimental setup is provided.\\
\noindent As indicated in Table \ref{tbl:foam_results}, even the worst-performing iUNet performed considerably better than the best-performing classic U-Net, both in terms of PSNR but especially in terms of SSIM. The U-Net contained an additive skip connection from the input to the output, which considerably improved its performance. While both model classes benefitted from an increased channel blowup, only the invertible U-Net benefits from raising the number of scales from 4 to 8 (at which point the receptive field spans the whole volume). The fact that the classic U-Net drops in performance despite a higher model capacity may indicate that the optimization is more problematic in this case. The invertible U-Net shows one of its advantages in this application: By initializing the layer normalization as the zero-mapping in each coupling block, the whole iUNet was initialized as the identity function. At initialization, \emph{each} convolutional layer's input is thus a part of the whole model input (up to an orthogonal transform). Since the optimal function for learned post-processing can be expected to be close to the identity function, we assume that this initialization is well-suited for this task. We further used the memory-efficiency of the invertible U-Net to double the channel blowup compared to the largest classic U-Net that we were able to fit into memory. This brought further performance improvements, showing that a higher model capacity can aid in such tasks.\newline 
In Figure \ref{fig:foam_example}, a test sample processed by the best-perfoming classic iUNet as well as classic U-Net are shown, along with the ground truth and the FBP reconstruction. Apart from the overall lower noise level, the iUNet is able to discern neighbouring holes from one another much better than the classic U-Net. Moreover, in this example a hole that is occluded by noise in the FBP reconstruction does not seem to be recognized as such by the classic U-Net, but is well-differentiated by the iUNet.

\begin{figure}[h]
    \centering
       \includegraphics[width=.22\textwidth]{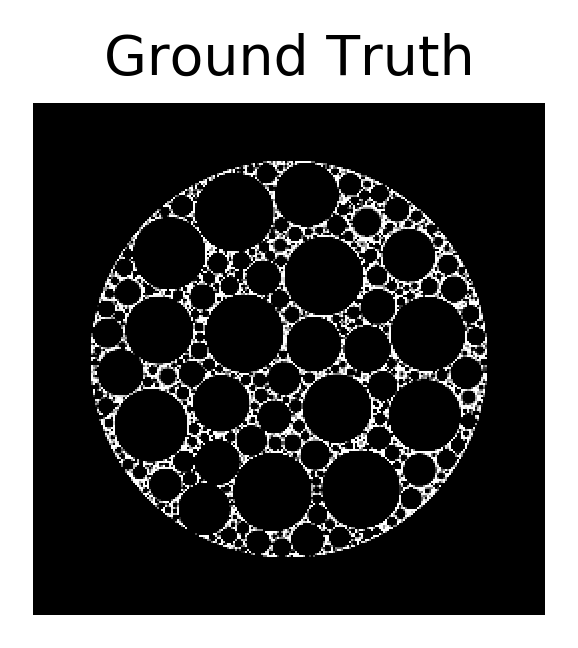}   \includegraphics[width=.22\textwidth]{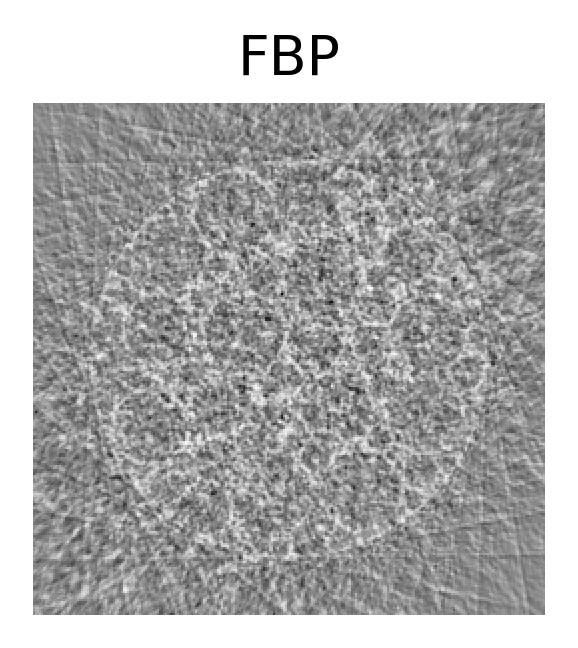}
       \includegraphics[width=.22\textwidth]{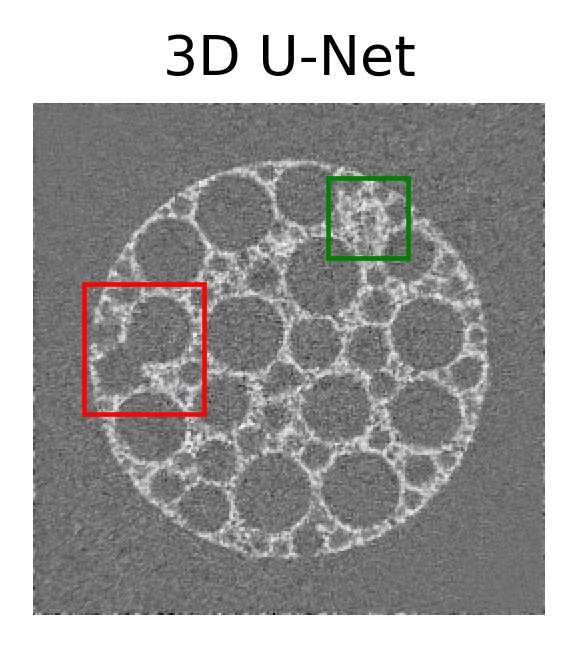}  \includegraphics[width=.22\textwidth]{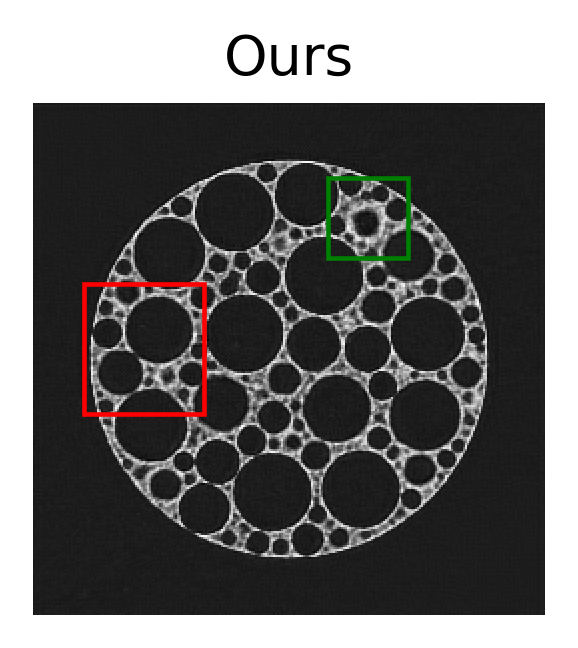}
    \caption{Slices through 3D volume from test set (post-processing task). Apart from the lower noise level compared to the classic 3D U-Net, the 3D iUNet is also able to differentiate much better between neighbouring holes (red) and discerning holes from noise (green).}
    \label{fig:foam_example}
\end{figure}

\begin{table}
\centering
\caption{Results of learned post-processing experiments. Here, 'scales' indicates the number of different resolutions, whereas 'channel blowup' denotes the number of feature maps before reverting to one feature map again.}
\label{tbl:foam_results}
\begin{tabular}{llllll}
\toprule
\multirow{2}{*}{scales} & \multirow{2}{*}{\shortstack[l]{channel\\ blowup}} & \multicolumn{2}{c}{3D U-Net} & \multicolumn{2}{c}{3D iUNet} \\
&  & SSIM & PSNR & SSIM &PSNR \\
\midrule
$4$ & $4$ & 0.302 & 13.29 & 0.568 & 14.00 \\
$4$ & $8$ & 0.416 & 13.89 & 0.780 & 14.99 \\
$8$ & $4$ & 0.236 & 12.42 & 0.768 & 15.10 \\
$8$ & $8$ & 0.425 & 13.92 & 0.829 & 15.82 \\
$8$ & $16$ & \phantom{---}- & \phantom{---}- & \textbf{0.854} & \textbf{16.11} \\
\bottomrule
\end{tabular}
\end{table}

\subsection{Brain Tumor Segmentation}


The following experiment is based on the multi-parametric MRI brain images from the brain tumor segmentation benchmark BraTS 2018 \citep{menze2014multimodal}, which is a challenging task due to the involved high dimensionality of the 3D volumes.


\noindent Here, we split the BraTS 2018 training set (including 285 multi-parametric MRI scans) into $91\%$ for training and $9\%$ for validation. Three types of tumor sub-regions, namely enhancing tumor (ET), whole tumor (WT) and tumor core (TC), are segmented and evaluated. The networks are trained on the $91\%$ annotated data, where the ground truth labels were collected by expert neuroradiologists. 
For this dataset, we consider three different sizes of invertible networks, with a channel blowup of 16, 32, and 64 channels respectively. For comparison, we also train a baseline 3D U-Net \citep{cciccek20163d}. In our implementation, we use $5$ different levels of resolutions for both the U-Net and the invertible networks, starting from a cropping size of $160 \times 192 \times 128$ and $4$ input channels corresponding to the MRI modalities (T1, T1-weighted, T2-weighted and FLAIR). Each $\Psi_i^L$ and $\Phi^R_i$ was parametrized by two additive coupling layers. Group normalization (with group size $8$) was used both for the U-Net and iUNet.  For the baseline U-Net, the input is followed by a blowup to $24$ channels, which is then doubled after each downsampling. This is the largest number of channels that we were able to fit into GPU memory in our experiments.
The invertible networks employ a channel split of $1/4$ (Remark \ref{rem:channel_exponential_increase}), meaning that the number of channels is doubled when the spatial resolution is decreased. 

In all cases, a final convolutional layer with a sigmoid nonlinearity maps the iUNet's output feature maps to the three channels associated with ET, WT and TC sub-regions respectively. The used training loss function is the averaged Dice loss for the ET, WT and TC regions respectively. 

In Table \ref{tab:BraTS2018}, we report the results on the BraTS validation set (including 66 scans), measured in terms of Dice score and sensitivity \citep{bakas2018identifying} respectively. According to the table, the increases of the channel numbers in the invertible networks lead to a gain in the performance in terms of the Dice score as well as
the sensitive. 
Thanks to the memory-efficiency and thus a larger possible number of channels under similar hardware configurations, iUNets that were larger than the baseline U-Net outperform this baseline U-Net, profiting from a larger model capacity. In Appendix \ref{appdx:segmentation_details}, we further demonstrate that (for the smallest iUNet) the memory-efficient and conventional backpropagation lead to comparable loss curves.

\begin{table}[ht]
    \centering
    \caption{Results on BraTS2018 validation set.}
    \label{tab:BraTS2018}
    \bgroup
    \setlength{\tabcolsep}{4pt}
    \begin{tabular}{lccccccccc}
    \hline
     & \multicolumn{4}{c}{\phantom{.}Dice score} && \multicolumn{4}{c}{Sensitivity} \\
      & ET & WT & TC & avg & & ET & WT & TC & avg \\
    \hline
     U-Net & 0.770 & \textbf{0.901} & 0.828 & 0.833 & \phantom{W} 
     & 0.776 & 0.914 & 0.813 & 0.834
     \\
    \hline
      iUNet-16 & 0.767 & 0.900 & 0.809 & 0.825 
    && 0.779 & 0.916 & 0.798 & 0.831
    \\
    iUNet-32 & 0.782 & 0.899 & 0.825 & 0.835 
    && 0.773 & 0.908 & 0.824 & 0.835
    \\
    iUNet-64 & \textbf{0.801} & 0.898 & \textbf{0.850} & \textbf{0.850} 
    && \textbf{0.796} & \textbf{0.918} & \textbf{0.829} & \textbf{0.848}
    \\
    \hline
    \end{tabular}
    \egroup
\end{table}

\subsection{Normalizing Flows}
In order to show the general feasibility of training iUNets as normalizing flows, we constructed an iUNet with 3 invertible downsampling operations. At each scale, 4 affine coupling layers \citep{dinh2016density} were used. Initially, the data was downsampled with a pixel shuffle operation, which turned the $3\times 32 \times 32$-images into images of size $12 \times 16 \times 16$, which allowed for the use of $\lambda=1/2$ at every scale. We trained the iUNet for 400 epochs on CIFAR10, which yielded a negative log-likelihood (NLL) of 3.60 bits/dim on the test set. This is somewhat worse than the most comparable method (Real NVP) by \citet{dinh2016density}, who report an NLL of 3.49 bits/dim. We suspect that the lower performance may stem from similar effects as reported in \citep{pmlr-v97-behrmann19a}, where inverses were calculated via truncated series. As analyzed in \citep{resflows}, this yields a biased estimator of the likelihood \eqref{eq:log_likelihood_normalising}. The authors propose to use a stochastic truncation \citep{kahn1955use}, which provides an unbiased estimator and improves the measured performance. Since we also use a series truncation for the matrix exponential (see Appendix \ref{appdx:implementation_details}), we conjecture that such an approach may debias our likelihood estimator as well, thereby improving our performance.
Fig. \ref{fig:normalizing_flows_cifar10} depict examples that were randomly generated by our model.
\begin{figure}[h]
    \centering
    \includegraphics[width=.48\textwidth]{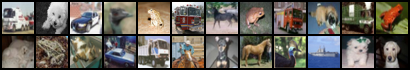}\hspace{.03\textwidth}
    \includegraphics[width=.48\textwidth]{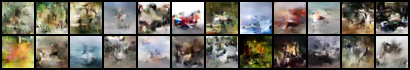}
    \caption{Left: Example images from CIFAR10. Right: Randomly picked images generated by an iUNet trained as a normalizing flow.}
    \label{fig:normalizing_flows_cifar10}
\end{figure}

\section{Conclusion and Future Work}
In this work, we introduced a fully invertible U-Net (iUNet), which employs a novel learnable invertible up- and downsampling. These are orthogonal convolutional operators, whose kernels are created by exponentiating a skew-symmetric matrix and reordering its entries. We show the viability of the iUNet for memory-efficient training on two tasks, 3D learned post-processing for CT reconstructions as well as volumetric segmentation. On both the segmentation as well as the CT post-processing task, the iUNet outperformed its non-invertible counterparts; in the case of the post-processing task even substantially. We therefore conclude that the iUNet should be used e.g. for high-dimensional tasks, in which a classic U-Net is not feasible.\newline
We have further demonstrated the general  feasibility of the iUNet structure for normalizing flows, although in terms of likelihood it performed somewhat worse than the comparison method.\\

In future work, we would therefore like to extend upon this work and find out how to improve the performance of iUNet-based normalizing flows.

\section*{Acknowledgements}
The authors thank Sil van de Leemput for his help in using and extending MemCNN, as well as Jens Behrmann for useful discussions around normalizing flows. CE and CBS acknowledge support from the Wellcome Innovator Award RG98755. RK and CBS acknowledge support from the EPSRC grant EP/T003553/1. CE additionally acknowledges partial funding by the Deutsche Forschungsgemeinschaft (DFG) - Projektnummer 281474342: ’RTG $\pi^3$ - Parameter Identification - Analysis, Algorithms, Applications’ for parts of the work done while being a member of RTG $\pi^3$. CBS additionally acknowledges support from the Leverhulme Trust project on ‘Breaking the non-convexity barrier’, the Philip Leverhulme Prize, the EPSRC grant EP/S026045/1, the EPSRC Centre Nr. EP/N014588/1, the RISE projects CHiPS and NoMADS, the Cantab Capital Institute for the Mathematics of Information and the Alan Turing Institute.

\section*{Broader Impact}
This work allows for the training of neural networks on high-dimensional imaging data. While we envision it as a tool for medical imaging applications, the methods presented here are general and can in principle be applied to all kinds of image data.

\bibliographystyle{plainnat}
\bibliography{refs}

\newpage
\appendix
{\Large Appendix}
\section{Orthogonal Learnable Downsampling}\label{appdx:ortho}
\subsection{Mathematical Preliminaries}
In the following, we introduce a simple notation which allows for a mathematically rigorous treatment of the presented theory.\\

For a tuple of matrices $(M_1,\dots,M_n)$ (where the matrices may be of different sizes), the \emph{direct sum} of these matrices is defined as the block diagonal matrix
\begin{equation}
    \bigoplus\limits_{i=1}^{n} M_i := M_1 \oplus \cdots \oplus M_n := \begin{pmatrix}
             M_1 &  &  \\
              & \hspace{-.3mm} \ddots \hspace{-.3mm}&   \\
             &  & M_n\phantom{.} 
            \end{pmatrix},
\end{equation}
which in particular implies $\det ( \oplus_{i=1}^n M_i) = \prod_{i=1}^n \det (M_i)$. Analogously, for a tuple $(f_1,\dots,f_n)$ of functions $f_i: A_i \to B_i$ (where $A_i$ and $B_i$ are some sets), we write 
\begin{equation}
    \begin{aligned}
        \oplus_{i=1}^{n} f_i : A_1 \times \cdots \times A_n &\to B_1 \times \cdots \times B_n \\
                            (a_1,\dots,a_n)&\mapsto (f_1(a_1),\dots,f_n(a_n)).
    \end{aligned}    
\end{equation}
Note that this construction is sometimes also called the cartesian product of functions, but rarely the direct sum of functions.

If additionally, $A_i=\mathbb{R}^{k_i}$ and $B_i=\mathbb{R}^{l_i}$ for some $k_i,l_i \in \mathbb{N}$ and $f_i: x \mapsto M_i x$, where $M_i \in \mathbb{R}^{l_i \times k_i}$ for all $i\in [m]$, then $\oplus_{i=1}^{n} f_i \cong \oplus_{i=1}^{n} M_i$. More generally, if the $f_i$ map linearly between finite-dimensional real vector spaces, $\oplus_{i=1}^{n} f_i$ is isomorphic to a direct sum of $n$ associated matrices.

Note that in particular, given sufficient differentiability of the $f_i$,
\begin{equation}
    \begin{aligned}
        \frac{\text{d} (\oplus_{i=1}^n f_i) (a_1,\dots,a_n)}{\text{d} (a_1,\dots,a_n)} = \bigoplus_{i=1}^n \frac{\text{d}  f (a_i)}{\text{d} a_i} 
    \end{aligned}
\end{equation}
holds. This is true both if we view the derivative as a linear function (in the sense of e.g. Fr\'{e}chet derivatives) or, up to isomorphism, if we identify this function with its corresponding Jacobian matrix.

\subsection{Main Results} \label{appdx:main_results_proofs}
As in the main paper, we denote by $\conv_s (K,x)$ the convolution of $x\in \mathbb{R}^{C \times {N}_1 \times \dots \times {N}_d}$ with a kernel $K\in \mathbb{R}^{\tilde{C} \times C \times k_1 \times \dots \times k_d}$ and stride $s$, where $k\in \mathbb{N}^d$. In the following, we will assume $N$ to be divisible by $s$ and define $\sigma:=s_1\cdots s_d$, $\tilde{C}:=\sigma C$ and $\tilde{N}:=N \oslash s$.

\orthoConv*
\begin{proof}
Let $X:=\mathbb{R}^{1\times N_1 \times \cdots \times N_d}$ and 
If $k_i=s_i$ for all $i\in [d]$ (i.e. the kernel size matches the strides), then the computational windows of the discrete convolution are non-overlapping. This means that each entry of $y=\conv_s (RA,x)$ is the result of the multiplication of the $\sigma$-by-$\sigma$-matrix $A$ with a $\sigma$-dimensional column vector of the appropriate entries from $x$. This means that
\begin{equation}\label{eq:proof_direct_sum}
    \begin{aligned}
        \tilde{\tovec} \left(\conv_s (RA,x) \right) &= \left( \bigoplus\limits_{j=1}^{\tilde{N}_1 \cdots \tilde{N}_d} A \right) \cdot \tovec (x)
         &=\begingroup 
            \setlength\arraycolsep{2pt} \renewcommand{\arraystretch}{0} \begin{pmatrix}
             A &  &  \\
              & \hspace{-.3mm} \ddots \hspace{-.3mm}&   \\
             &  & A\phantom{.} 
            \end{pmatrix} \endgroup \cdot \tovec(x), \\
    \end{aligned}
\end{equation}
where $\tovec : \mathbb{R}^{1\times N_1 \times \cdots \times N_d} \to \mathbb{R}^{N_1 \cdots N_d}$ and $\tilde{\tovec} : \mathbb{R}^{\sigma \times \tilde{N}_1 \times \cdots \times \tilde{N}_d} \to \mathbb{R}^{\sigma \cdot \tilde{N}_1 \cdots \tilde{N}_d}$ denote appropriate reordering operators (into column vectors). Note that reordering operators are always orthogonal.
We will now show that if the block diagonal matrix $\hat{A}:=\bigoplus_{j=1}^{\prod_i \tilde{N}_i} A$ is orthogonal, then the convolution $\conv_s(RA,\cdot)$ is an orthogonal operator. Since $$ \hat{A}^T \hat{A} = \bigoplus_{j=1}^{\prod_i \tilde{N}_i} A^T A \text{\phantom{.} and \phantom{.}}  \hat{A} \hat{A}^T = \bigoplus_{j=1}^{\prod_i \tilde{N}_i} A A^T,$$
$A$ being orthogonal implies $\hat{A}$ being orthogonal. 
For any $a,b \in \mathbb{R}^{\tilde{C}\times \tilde{N}_1 \times \cdots \times \tilde{N}_d}$ it holds that,
\begin{equation}\label{eq:orthogonality_proof}
    \begin{aligned}
    &\langle \conv_s(RA,a), \conv_s(RA,b) \rangle \\
    =&\langle \tilde{\tovec} ( \conv_s(RA,a)), \tilde{\tovec} (\conv_s(RA,b))\rangle \\
    =& \langle \hat{A} \tovec (a), \hat{A} \tovec (a) \rangle   = 
    \langle \tovec (a), \tovec (b) \rangle   = 
    \langle a,b \rangle,
    \end{aligned}
\end{equation}
where we used the fact that the reordering into column vectors as well as $\hat{A}$ are orthogonal operators. Hence, we proved that $\conv_s(RA,\cdot)$ is an orthogonal operator (and in particular bijective).

\end{proof}

\begin{corollary}
Let $\theta^i \in \mathbb{R}^{\sigma \times \sigma}$ for all $i\in [d]$. For $\hat{\theta} = (\theta^1, \dots, \theta^C)$, the operator $$\hat{D}_{\hat{\theta}}: \mathbb{R}^{C \times N_1 \times \cdots \times N_d} \to \mathbb{R}^{\tilde{C}\times \tilde{N}_1 \times \cdots \times \tilde{N}_d}$$ given by 
$$\hat{D}_{\hat{\theta}}: \begin{pmatrix}x_{1,\dots} \\ \vdots \\ x_{C,\dots} \end{pmatrix} \mapsto \begin{pmatrix}
D_{\theta^1} (x_{1,\dots}) \\ 
\vdots \\ 
D_{\theta^C} (x_{C,\dots}) 
\end{pmatrix}$$
is a learnable invertible downsampling operator, parametrized by $\hat{\theta}$ over the parameter space $(\mathbb{R}^{\sigma \times \sigma})^C$, where $D_{{\theta^i}}$ is defined as in Corollary \ref{cor:learnable_downsampling_one_channel}.
\end{corollary}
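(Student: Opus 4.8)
The plan is to recognize $\hat{D}_{\hat{\theta}}$ as a direct sum of the single-channel operators from Corollary \ref{cor:learnable_downsampling_one_channel} and to invoke the elementary fact that a cartesian product of bijections is again a bijection. First I would fix the natural channel-splitting isomorphism $\Pi: \mathbb{R}^{C \times N_1 \times \cdots \times N_d} \to (\mathbb{R}^{1 \times N_1 \times \cdots \times N_d})^C$ sending a tensor to the tuple $(x_{1,\dots}, \dots, x_{C,\dots})$ of its channels, together with the channel-stacking isomorphism $\Sigma: (\mathbb{R}^{\sigma \times \tilde{N}_1 \times \cdots \times \tilde{N}_d})^C \to \mathbb{R}^{\tilde{C} \times \tilde{N}_1 \times \cdots \times \tilde{N}_d}$ that concatenates $C$ blocks of $\sigma$ channels into a single tensor of $\tilde{C} = \sigma C$ channels. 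Both $\Pi$ and $\Sigma$ are reordering operators, hence bijective (indeed orthogonal). With these identifications the operator of the statement reads $\hat{D}_{\hat{\theta}} = \Sigma \circ \left( \bigoplus_{i=1}^C D_{\theta^i} \right) \circ \Pi$.

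Next I would verify that each factor is a bijection. By Corollary \ref{cor:learnable_downsampling_one_channel}, for every $\theta^i \in \mathbb{R}^{\sigma \times \sigma}$ the single-channel map $D_{\theta^i}: \mathbb{R}^{1 \times N_1 \times \cdots \times N_d} \to \mathbb{R}^{\sigma \times \tilde{N}_1 \times \cdots \times \tilde{N}_d}$ is a bijective invertible downsampling operator. Using the direct-sum notation for functions from the preliminaries, the map $\bigoplus_{i=1}^C D_{\theta^i}$ is then bijective, with inverse $\bigoplus_{i=1}^C D_{\theta^i}^{-1}$ acting componentwise, since a tuple of maps is invertible exactly when each component is. Composing with the bijections $\Pi$ and $\Sigma$ preserves bijectivity, so $\hat{D}_{\hat{\theta}}$ is bijective.

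Finally I would check the dimensional bookkeeping demanded by the definition of an invertible downsampling operator: the domain is $\mathbb{R}^{C \times N_1 \times \cdots \times N_d}$, and since stacking $C$ blocks of $\sigma$ channels yields $\sigma C = \tilde{C}$ output channels at the reduced resolution $\tilde{N} = N \oslash s$, the codomain is $\mathbb{R}^{\tilde{C} \times \tilde{N}_1 \times \cdots \times \tilde{N}_d}$, exactly as required. Because this argument holds verbatim for every $\hat{\theta} \in (\mathbb{R}^{\sigma \times \sigma})^C$, the assignment $\hat{\theta} \mapsto \hat{D}_{\hat{\theta}}$ is a learnable invertible downsampling operator over that parameter space, which is the claim. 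The only point demanding care — and the closest thing to an obstacle — is identifying the channel-splitting and channel-stacking maps as genuine isomorphisms and confirming that their composition with the direct sum reproduces exactly the componentwise action written in the statement; once this reordering bookkeeping is in place, invertibility follows with no further computation.
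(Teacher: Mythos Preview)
Your proof is correct and follows essentially the same approach as the paper: both recognize $\hat{D}_{\hat{\theta}}$ as the direct sum $\oplus_{i=1}^C D_{\theta^i}$ (up to the reordering isomorphisms you make explicit) and reduce to the single-channel case. The only minor difference is that the paper writes out the associated block-diagonal matrix $\bigoplus_{i}\bigoplus_{j}\exp(\theta^j-(\theta^j)^T)$ and concludes \emph{orthogonality} (as in Theorem~\ref{thm:orthogonal_conv}), whereas you stop at bijectivity, which is all the corollary literally requires; the stronger orthogonality is, however, used later in the paper for the unit-determinant argument in the normalizing-flow section.
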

\begin{proof}
By definition, it holds that $\hat{D}_{\hat{\theta}}= \oplus_{i=1}^C D_{\theta^i}$. Furthermore, since the exponential of a skew-symmetric matrix is special orthogonal, $\exp ((\theta) - (\theta)^T)$ is special orthogonal for any square matrix $\theta$ (Corollary \ref{cor:learnable_downsampling_one_channel}). Each $D_{\theta^i}$ is associated with a matrix $\oplus_{j=1}^{\tilde{N}_1 \cdots \tilde{N}_d} \exp (\theta^j - (\theta^j)^T)$ (which is orthogonal, as in the proof of Theorem \ref{thm:orthogonal_conv}), so that
$$\tilde{\tovec} \cdot \hat{D}_{\hat{\theta}} =  \left( \bigoplus\limits_{i=1}^C \bigoplus\limits_{j=1}^{\tilde{N}_1 \cdots \tilde{N}_d} \exp (\theta^j - (\theta^j)^T) \right) \cdot \tovec$$
for appropriate reorderings $\tovec : \mathbb{R}^{C\times N_1 \times \cdots \times N_d} \to \mathbb{R}^{C \cdot N_1 \cdots N_d}$ and $\tilde{\tovec} : \mathbb{R}^{\tilde{C} \times \tilde{N}_1 \times \cdots \times \tilde{N}_d} \to \mathbb{R}^{\tilde{C} \cdot \tilde{N}_1 \cdots \tilde{N}_d}$. The orthogonality proof is then exactly analogous to the proof for Theorem \ref{thm:orthogonal_conv}.
\end{proof}

\subsection{Reproduction of Known Invertible Downsampling Methods}\label{appdx:haar_by_exp}
Any symmetric matrix $\theta_\text{ps} \in \mathbb{R}^{\sigma \times \sigma}$ yields the \emph{pixel shuffle} operation, whereas
$$\theta_\text{haar} = \frac{\pi}{4} \setlength\arraycolsep{4pt}\begin{pmatrix*}[r]
0&\phantom{-}0&-1&-1 \\
0&0&1&1 \\
0&0&0&0 \\
0&0&0&0 \\
\end{pmatrix*}$$
yields the 2D Haar transform. This demonstrates that the presented technique can learn both very similar-looking as well as very diverse feature maps.

For the pixel shuffle, one can easily see that $\exp(\theta_\text{ps}-\theta_\text{ps}^T)=\exp(0)=I$ holds, which yields the corresponding matrix belonging to the pixel shuffle operation.

Proving the above Haar representation is more involved. We will show that
\begin{equation}
\exp(\theta_\text{haar}-\theta_\text{haar}^T)=\frac{1}{2}\begin{pmatrix}
 1&  \phantom{-}1& -1& -1 \\
       1&  \phantom{-}1&  \phantom{-}1&  \phantom{-}1\\
       1& -1&  \phantom{-}1& -1\\
       1& -1& -1&  \phantom{-}1
    \end{pmatrix}=:M_\text{haar} ,
    \end{equation}
which is one of the possible matrices associated to the 2D Haar transform when reordered into convolutional kernels (Theorem \ref{thm:orthogonal_conv}). We initially numerically solved $\log (M_\text{haar})$ in order to guess the representation $\theta_\text{haar}$, which we will now prove.\\

For this, we define the matrices
\begin{equation}
    A = \frac{1}{2}\begin{pmatrix}
    0 & \phantom{-}0 & -1 & -1\\
    0 & \phantom{-}0 & \phantom{-}1 & \phantom{-}1\\
    1 & -1 & \phantom{-}0 & \phantom{-}0\\
    1 & -1 & \phantom{-}0 & \phantom{-}0
    \end{pmatrix} \hspace{1cm}     B = \frac{1}{2}\begin{pmatrix}
\phantom{-}1 & -1&  \phantom{-}0&  \phantom{-}0 \\
 -1 &\phantom{-}1&  \phantom{-}0&  \phantom{-}0 \\
 \phantom{-}0 & \phantom{-}0& \phantom{-}1& \phantom{-}1 \\
 \phantom{-}0 & \phantom{-}0& \phantom{-}1& \phantom{-}1 \\
    \end{pmatrix}.
\end{equation}

Note that $\theta_\text{haar}-\theta_\text{haar}^T=\tfrac{\pi}{2}A$. Its easy to verify that $A^2=-B$, $B^2=B$ and $AB=BA=A$. Then it holds that
\begin{align}
    \exp (tA) =& \sum\limits_{n=0}^{\infty} \frac{(tA)^n}{n!} \nonumber \\
     =& \sum\limits_{n=0}^{\infty} \frac{(tA)^{2n}}{(2n)!} + \sum\limits_{n=0}^{\infty} \frac{(tA)^{2n+1}}{(2n+1)!} \nonumber\\
     =& I + \sum\limits_{n=1}^{\infty} \frac{(tA)^{2n}}{(2n)!} + \sum\limits_{n=0}^{\infty} \frac{(tA)^{2n+1}}{(2n+1)!} \label{eq:haar_trick_a}\\
     =& I + \sum\limits_{n=1}^{\infty} \frac{t^{2n} (A^2)^n}{(2n)!} + \sum\limits_{n=0}^{\infty} \frac{t^{2n+1} (A^2)^n A}{(2n+1)!} \nonumber\\
     =& I -B + B + \sum\limits_{n=1}^{\infty} \frac{t^{2n} (-B)^n}{(2n)!} + \sum\limits_{n=0}^{\infty} \frac{t^{2n+1} (-B)^n A}{(2n+1)!} \label{eq:haar_trick_b}\\
     =& I -B + B \cdot \left( 1+ \sum\limits_{n=1}^{\infty} (-1)^n\frac{t^{2n} }{(2n)!} \right)  +  \sum\limits_{n=0}^{\infty}(-1)^n \frac{t^{2n+1}}{(2n+1)!}BA \label{eq:haar_trick_c}\\
     =& I -B + \cos (t) B + \sin (t) A \nonumber,
\end{align}
where we were able to split the series into subseries due to the fact that the matrix exponential series is absolutely convergent. Note the similarity to Euler's formula, where $B$ corresponds to 1 and $A$ corresponds to the imaginary unit $i$. Since $(A^2)^n=B^n=B$ \emph{for all} $n$ \emph{except for} $n=0$, we had to add and then substract $B$ \eqref{eq:haar_trick_b} in order to be able to factor it out later in the $\cos$-series \eqref{eq:haar_trick_c}. Furthermore, we had to account for the $B^0=I$ that was dropped out of the $\cos$-series \eqref{eq:haar_trick_a}. The above is a special case of a formula given in \citep{gallier2003computing}.

Then, by using that $\theta_\text{haar}-\theta_\text{haar}^T=\frac{\pi}{2} A$, we see that
\begin{equation}
    \begin{aligned}
        \exp (\theta_\text{haar}-\theta_\text{haar}^T)=&\exp (A\pi/2) \\
        =&I-B+ \cos(\pi/2) B + \sin (\pi/2) A \\
                    =& I-B + 0\cdot B + A \\
                    =& M_\text{haar},
    \end{aligned}
\end{equation}
which proves our statement.

\section{Implementation Details}\label{appdx:implementation_details}
When implementing invertible up- and downsampling, one needs both an implementation of the matrix exponential and for calculating gradients with respect to both the weight $\theta$ as well as the input $x$.
For the matrix exponentiation, we simply truncate the series representation
\begin{equation}
    \exp (A)=\lim_{n \to \infty}\sum\limits_{k=0}^n \frac{A^k}{k!}
\end{equation}
after a fixed number of steps. Since the involved matrices are typically small (see Remark \ref{rem:2d3dsigma}), the computational overhead of calculating the matrix exponential this way is small compared to the convolutions. More computationally efficient implementations include Pad\'{e} approximations and \emph{scaling and squaring} methods \citep{al2009computing}. 

\noindent Using $\Gamma: \theta \mapsto \theta - \theta^T$ (which is a self-adjoint, linear operator), $S:=\Gamma(\theta)$, $A:=R \cdot \exp(S)$ and $y:=\conv_s(A,x)$, employing the chain rule yields
\begin{equation}
    \begin{aligned}
        \nabla_\theta \L =& \left( \diff{y}{\theta} \right)^\ast \cdot \nabla_y \L \\
        =& \left( \diff{y}{A} \cdot \diff{A}{S} \cdot \diff{S}{\theta} \right)^\ast \cdot \nabla_y \L \\
         =& \left( \conv_s(\cdot,x) \cdot R\cdot \diff{\exp(S)}{S} \cdot \Gamma \right)^\ast \cdot \nabla_y \L. \\
    \end{aligned}
\end{equation}
The derivatives are linear operators (in the sense of Fr\'{e}chet derivatives), and as such admit adjoints. Note that the adjoint of $\conv_s(\cdot, x)$ is \emph{not} the transposed convolution (which takes values in $\mathbb{R}^{C\times {N}_1 \times \cdots {N}_d}$). Instead, this is an adjoint with respect to the kernel variable (which exists, because the convolution is linear in its kernel) and it takes values in $\mathbb{R}^{\sigma\times 1 \times s_1 \times \cdots \times s_d}$. In the following, we will denote this operator by $\conv_s^\Box$ (cf. \citep{etmann2019closer}). Furthermore, denote by $\exp'(S)$ the Fr\'{e}chet derivative of $\exp$ in $S$. When incorporating the fact that $\exp'(S)^\ast=\exp'(S^T)$ \citep{al2009computing}, this leads to the expression 
\begin{equation}
    \nabla_\theta \L = \Gamma \cdot \exp'(S^T) \cdot R^\ast \cdot \conv^\Box_s(\nabla_y \L,x).\\
\end{equation}
Analogously to the matrix exponential itself, we approximate its Fr\'{e}chet derivative by a truncation of the series
$$\exp' (S) \cdot H = \lim_{n \to \infty}\sum\limits_{k=1}^n \frac{1}{k!} M_k$$ where $M_k=M_{k-1}S+S^{k-1}M_1$ with $M_1=H$ \citep{al2009computing}. The gradients for invertible learnable upsampling follow analogously from these derivations. Both series have infinite convergence radius.
It should be noted that this implementation was mainly chosen for simplicity. More computationally efficient implementations can obtained by Pad\'{e} approximations and scaling-and-squaring algorithms.

\section{Normalizing Flows}\label{appdx:normalizing_flows}
Let the random variable $z$ have probability density function $q$, for which we will write $z \sim q(z)$. For any diffeomorphism $f$, it holds that $$x:=f^{-1}(z) \sim q(z) \cdot \left| \det \frac{\text{d} f^{-1}(z)}{\text{d} z} \right|^{-1}$$
due to the change-of-variables theorem. This means that for the probability density of $x$ (denoted $p(x)$), the log-likelihood of x can be expressed as
\begin{equation}
    \log p(x) = \log q(f(x)) + \log \left| \det \frac{\text{d} f(x)}{\text{d} x}  \right|. \label{eq:appdx_log_likelihood_normalising}
\end{equation}
If $f$ is parametrized by an invertible neural network, \eqref{eq:appdx_log_likelihood_normalising} can be maximized over a training set, which yields both a likelihood estimator $f$ as well as a data generator $f^{-1}$. Models trained this way are called \emph{normalizing flows}. The main difficulty lies in the evaluation of the determinant-term, respectively the whole 'log-abs-det' term. In the following, we will derive an expression for the determinant. 

We write $S_i := \text{split}_i$, such that $S_i^{-1} = \text{concat}_i$ (in fact even $\text{split}^\ast_i=\text{concat}_i$). As an alternative to the definition in \eqref{eq:iunet_definition}, one can define the iUNet recursively via
\begin{align}
    \Psi_i &= \Phi_i^L \circ a_{i+1} \circ \Phi_i^R \label{eq:recursive_def_psi}\\
    a_{i}  &=S_i \circ (\text{id} \oplus D_i \circ \Psi_{i} \circ U_i) \circ S_i^{-1} \label{eq:recursive_def_a},
\end{align}
where $a_{m+1}=\text{id}$ and $i$ runs from $m$ to $1$. The iUNet is then defined as $f:=\Psi_1$. 

For brevity, we write e.g. $\partial \Phi^R_i:=\diff{\Phi^R_i (x^R_i)}{x^R_i}$ for each function appearing in \eqref{eq:iunet_definition}, where the derivatives are evaluated at the respective points. Note that for linear operators, the derivatives conincide with the operators themselves.
Hence, for the derivative of $a_i$ in equation \eqref{eq:recursive_def_a} it holds that
$$\partial (\text{id} \oplus D_i \circ \Psi_{i} \circ U_i) = \text{id} \oplus D_i \cdot \partial \Psi_i \cdot U_i$$ and thus
    \begin{align}
    \det \partial a_i =& \det S_i \cdot \det(\text{id} \oplus D_i \cdot \partial \Psi_i \cdot U_i) \cdot \det S_i^{-1} \label{eq:eq_a} \\
    =& \det(\text{id} \oplus D_i \cdot \partial \Psi_i \cdot U_i) \label{eq:eq_b} \\
    =& \det(\text{id}) \cdot \det (D_i \cdot \partial \Psi_i \cdot U_i) \label{eq:eq_c}\\
    =& \det \partial \Psi_i \label{eq:eq_d},
    \end{align}
where we used that the determinant of a product is the product of the determinants \eqref{eq:eq_a}, the fact that $\det (S^{-1}_i) = 1/ \det(S_i)$ \eqref{eq:eq_b}, the fact that the determinant of a block diagonal matrix is the product of the determinants of each block \eqref{eq:eq_c} and the special orthogonality of the identity mapping and our invertible up- and downsampling operators \eqref{eq:eq_d}.
By recursion of \eqref{eq:recursive_def_psi} and \eqref{eq:recursive_def_a}, this yields
$$\det \frac{\text{d} f(x_1^L)}{\text{d} x_1^L} = \det \partial \Phi_1^L  \cdots  \det \partial \Phi_m^L \cdot  \det \partial \Phi_m^R  \cdots \det \partial \Phi_1^R ,$$
i.e. the calculation of the iUNet's determinant-term reduces to the determinants corresponding to the nonlinear invertible portions of the iUNet. This then depends on the exact parametrization of the $\Phi^L_i$ and $\Phi^R_i$, e.g. as affine coupling layers \citep{dinh2016density} or residual flow layers \citep{resflows}.

\section{Additional Information on Experimental Section} \label{appdx:experimental_details}

\subsection{Post-Processing Experiment} \label{appdx:postprocessing_details}
In the following, we will provide additional details about our artificial dataset of undersampled, low-dose CT reconstructions of the 3D 'foam phantoms' from \citep{pelt2018improving} created by filtered backprojection. Our training set consisted of 180 volumes, while the test set consisted of 20 volumes. These are comprised of cylinders of varying size, filled with a large number of holes. The volumes were generated at a resolution of $1024^3$ before trilinearly downsampling (to prevent aliasing artifacts). At a resolution of $512^3$,  a reconstruction using filtered backpropjection of a strongly undersampled parallel-beam CT projection with Poisson noise was created, which simulates a low-dose projection. A diagonal axis of the volume served as the CT axis (perturbed by angular noise). We expect the varying size of the phantoms, the artifacts on the FBP reconstructions as well as the large-scale bubble structures to favor networks with a large, three-dimensional receptive field (i.e. many downsampling operations), which justifies the use of 3D iUNets and 3D U-Nets. The FBP reconstructions as well as the ground truth volumes were downsized to $256^3$. Both 3D U-Nets as well as 3D iUNets were subsequently trained to retrieve the ground truth from the FBP reconstructions using the squared $\ell_2$-loss. The \emph{peak signal-to-noise-ratios} (PSNR) as well as the \emph{structural similarity indices} (SSIM) of this experiment cohort are compiled in Table \ref{tbl:foam_results}. Each line represents one classic 3D U-Net and one 3D iUNet of comparable size. While there is no way to construct perfectly comparable instances of both, the 3D U-Net uses 2 convolutional layers before downsampling (respectively after upsampling), whereas the 3D iUNet employs 4 additive coupling layers (each acting on half of the channels). In the case of the classic U-Net, 'channel blowup' indicates the number of channels before the first downsampling operation (identical to the number of output feature maps before reducing to one feature map again). In both architectures, layer normalization was applied. The batch size was 1 in all cases, because for the larger models this was the maximum applicable batch size due to the large memory demand. Random flips and rotations were applied for data augmentation.\newline

\subsection{Segmentation Experiment} \label{appdx:segmentation_details}
Here, we compare the gradients computed by memory-efficient backpropagation with those computed by the conventional way (i.e., all required activation of the network are \emph{stored}. For this, in Figure \ref{fig:compare_grad} we show the training and validation loss curves for both cases. The results are based on iUNet-16 architecture, and the losses are computed based on our local training/validation split of the BraTS 2018 training set. It can be seen from the figure that the memory-efficient gradient leads to a loss similar to the loss associated with the conventional gradient computation, both on the training set and validation set. 

\begin{figure}[h]
    \centering
    \begin{tabular}{ll}
       \includegraphics[width=.4\textwidth]{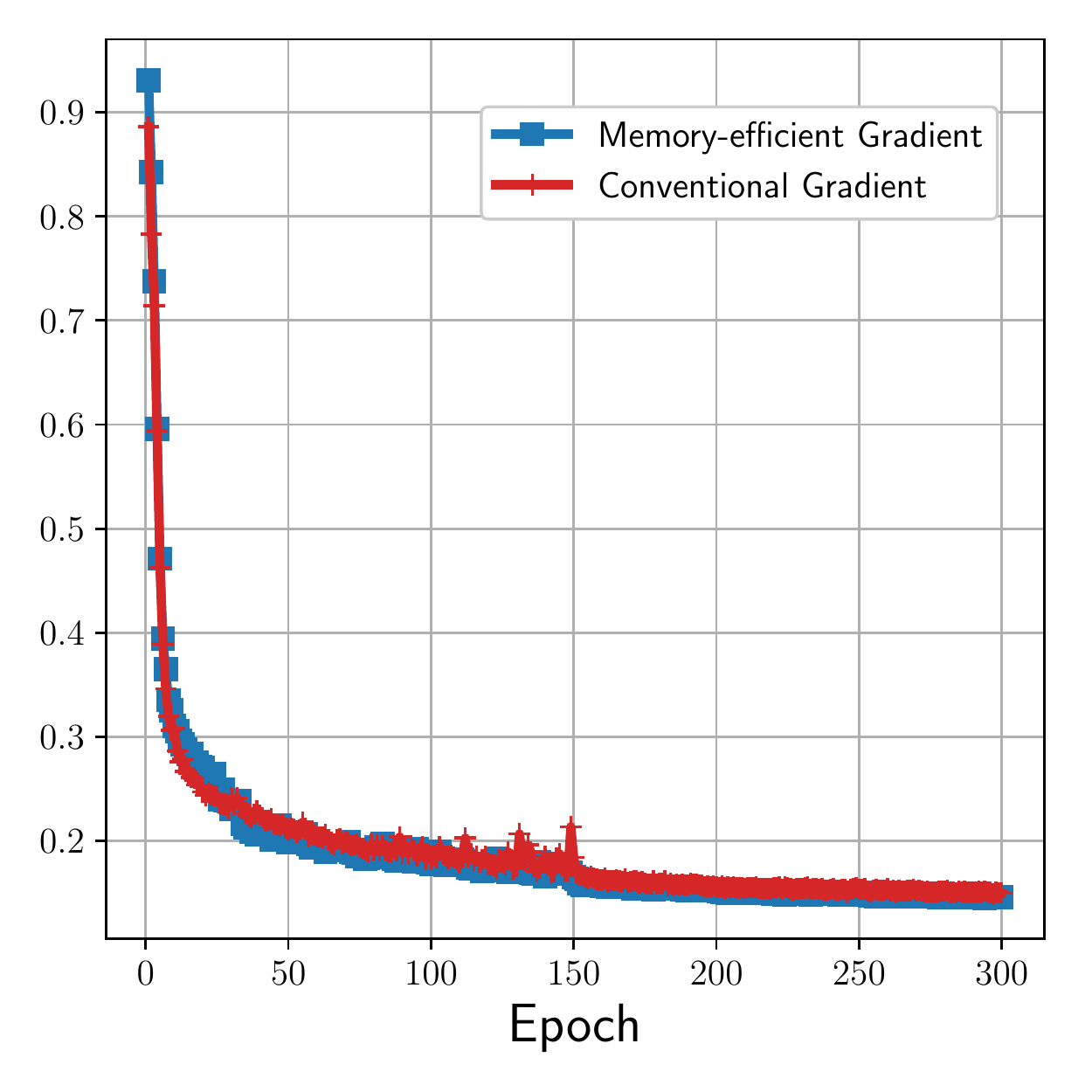}  & \includegraphics[width=.4\textwidth]{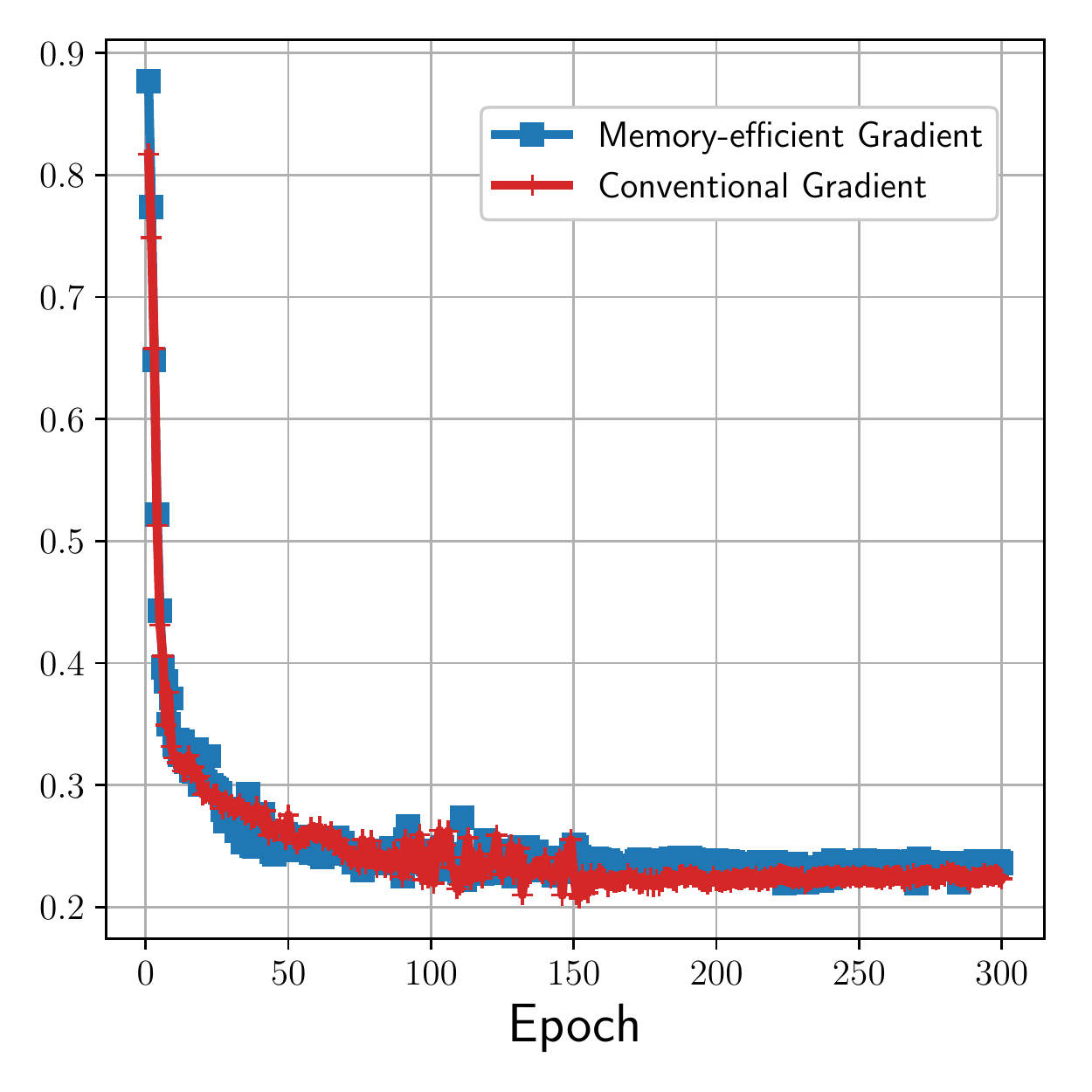}
    \end{tabular}
    \caption{Comparison between the losses for iUNet-16 implemented with memory-efficient gradient and the conventional gradient. In the left figure, the training loss is plotted against the epoch number, whereas in the right figure, the validation loss is plotted.}
    \label{fig:compare_grad}
\end{figure}

\subsection{Benchmarking Runtimes and Memory Demand}\label{appdx:runtimes}
Here, the runtimes and memory demands of weight gradients of iUNets computed with memory-efficient backpropagation are compared to conventionally computed ones. For this, we created 2D iUNets with 4 downsampling operators and slice fraction $\lambda=1/2$. Each $\Phi_i^L$ respectively $\Phi^R_i$ was defined as a sequence of $\delta \in \lbrace 5,10,20,30 \rbrace$ additive coupling layers, where each coupling block consisted of one convolutional layer with layer normalization and leaky ReLU activation functions. The memory demand and runtimes were calculated based on artificial input of size $64\times 512\times 512$ and batch size 1 (at 32 bits). The runtimes were averaged over 10 runs.\\

In Table \ref{tbl:runtime_results}, the results of this experiment are presented. While the peak memory consumption for the memory-efficient backpropagation is in practice not quite independent of depth, this may be at least partially explained by the additional overhead of storing the neural network's parameters (which was neglected in the main paper's analysis). Furthermore, some of the memory overhead may be due to the used CUDA and cuDNN backends, which is difficult to account for in practice. Still, the memory savings are quite large and become more pronounced with increasing depth, as one saves e.g. 87.8 \% of memory in the case of the deepest considered network ($\delta=30$). In terms of runtime, the memory-efficient backpropagation was between 67\% and 107\% slower than the conventional backpropagation. We stress, however, that in a real-world training scenario, the relative impact of this becomes lower, as the calculation of the gradients represents only one part of each training step, whereas e.g. the data loading or possible momentum calculation is independent of the chosen backpropagation method. 

\begin{table}[h]
\centering
\caption{Comparison of memory consumption and runtimes for calculating all weight gradients using memory-efficient (ME) backpropagation and conventional backpropagation.}
\label{tbl:runtime_results}
\begin{tabular}{cccccccc}
\toprule
 & \multicolumn{3}{c}{Peak memory consumption} & & \multicolumn{3}{c}{Runtime} \\
& ME & Conventional & Ratio & & ME & Conventional & Ratio \\
\midrule
$\delta=5\phantom{0}$ & 0.85 GB & 3.17 GB & 26.8 \% & \hspace{5mm} & 1.94 s & 1.16 s & 167 \%  \\
$\delta=10$ & 1.09 GB & 5.90 GB & 18.4 \% & & 4.10 s & 2.45 s & 167 \%  \\
$\delta=20$ & 1.57 GB & 11.36 GB & 13.8 \% & & 6.82 s & 3.67 s & 186 \% \\
$\delta=30$ & 2.06 GB & 16.82 GB & 12.2 \% & & 10.63 s & 5.13 s & 207 \% \\
\bottomrule
\end{tabular}
\end{table}

\end{document}